\newcommand{\epsgen}{\epsilon_{\textrm{g}}}
\newcommand{\epspr}{\epsilon_{\textrm{p}}}
\newcommand{\deltapr}{\delta_{\textrm{p}}}
\newcommand{\deltagen}{\delta_{\textrm{g}}}
\newcommand{\tO}{\tilde{O}}
\newcommand{\SU}{\mathcal{SU}}
\newcommand{\LS}{\mathcal{LS}}
\newcommand{\ULS}{\overbar{\mathcal{LS}}}
\newcommand{\GAP}{\mathcal{GAP}}
\title{Smooth Sensitivity Based Approach for Differentially Private Principal Component Analysis}
\author{Ran
  Gilad-Bachrach\footnote{rang@microsoft.com, Microsoft Research} \and Alon Gonen\footnote{agonen@cs.princeton.edu, Computer Science,
    Princeton University. Part of this work was conducted when AG was with Microsoft Research}}
\date{\today}
\newtheorem{definition}{Definition}
\newtheorem{lemma}{Lemma}
\newtheorem{corollary}{Corollary}
\newtheorem{theorem}{Theorem}
\newtheorem{remark}{Remark}
\newcommand{\bE}{\mathbb{E}}
\newcommand{\reals}{\mathbb{R}}
\newcommand{\bN}{\mathbb{N}}
\newcommand{\supp}{\mathrm{supp}}
\newcommand{\tu}{\tilde{u}}
\newcommand{\tU}{\tilde{U}}
\newcommand{\hu}{\hat{u}}
\newcommand{\hU}{\hat{U}}
\newcommand{\hF}{\hat{F}}
\newcommand{\hC}{\hat{C}}
\newcommand{\cX}{\mathcal{X}}
\newcommand{\cD}{\mathcal{D}}
\newcommand{\cU}{\mathcal{U}}
\newcommand{\cA}{\mathcal{A}}
\newenvironment{proof}{\par\noindent{\bf Proof\ }}{\hfill\BlackBox\\[2mm]}
\newcommand{\BlackBox}{\rule{1.5ex}{1.5ex}}  
\newcommand{\overbar}[1]{\mkern 1.5mu\overline{\mkern-1.5mu#1\mkern-1.5mu}\mkern 1.5mu}
\def\moverlay{\mathpalette\mov@rlay}
\def\mov@rlay#1#2{\leavevmode\vtop{%
   \baselineskip\z@skip \lineskiplimit-\maxdimen
   \ialign{\hfil$\m@th#1##$\hfil\cr#2\crcr}}}
\newcommand{\charfusion}[3][\mathord]{
    #1{\ifx#1\mathop\vphantom{#2}\fi
        \mathpalette\mov@rlay{#2\cr#3}
      }
    \ifx#1\mathop\expandafter\displaylimits\fi}
\newcommand{\hL}{\hat{L}}
\newcommand{\tr}{\mathrm{tr}}
\DeclareMathOperator*{\argmin}{arg\,min}
\renewcommand{\eqref}[1]{Equation~(\ref{#1})}
\newcommand{\thmref}[1]{Theorem~\ref{#1}}
\newcommand{\lemref}[1]{Lemma~\ref{#1}}
\newcommand{\corref}[1]{Corollary~\ref{#1}}
\newcommand{\algref}[1]{Algorithm~\ref{#1}}
\newcommand{\handout}[5]{
   \renewcommand{\thepage}{#1-\arabic{page}}
   \noindent
   \begin{center}
   \framebox{
      \vbox{
    \hbox to 5.78in { {\bf (67577) Introduction to Machine Learning}
         \hfill #2 }
       \vspace{4mm}
       \hbox to 5.78in { {\Large \hfill #5  \hfill} }
       \vspace{2mm}
       \hbox to 5.78in { {\it #3 \hfill #4} }
      }
   }
   \end{center}
   \vspace*{4mm}
}
\begin{document}
\maketitle

\begin{abstract}
We consider the challenge of differentially private PCA. 
Currently known methods for this task either employ the computationally intensive
\emph{exponential mechanism} or require an access to the covariance matrix,
and therefore fail to utilize potential sparsity of the data. The problem of
designing simpler and more efficient methods for this task has been
raised as an open problem in \cite{kapralov2013differentially}.

In this paper we address this problem by employing the output
perturbation mechanism. Despite being arguably the simplest and most
straightforward technique, it has been overlooked due to
the large \emph{global sensitivity} associated with publishing the
leading eigenvector. We tackle this issue by adopting a \emph{smooth sensitivity} based 
approach, which allows us to
establish differential privacy (in a worst-case manner) and
near-optimal sample complexity results under eigengap assumption. We
consider both the pure and the approximate  notions of differential privacy, and demonstrate a tradeoff between privacy level and sample complexity. We conclude by
suggesting how our results can be extended to related problems.
\end{abstract} 

\section{Introduction} \label{sec:intro}
\emph{Differential Privacy} has become a crucial requirement in many
machine learning tasks involving private data such as medical and
financial records (\cite{dwork2008differential,
  dwork2014algorithmic, chaudhuri2011differentially, blum2013learning,
  mcsherry2007mechanism}). Informally speaking, a mechanism is said to be
differentially private if one can hardly distinguish between two
outputs of the algorithm corresponding to samples that differ in one
entry. Since each entry typically corresponds
to records of a single person, differential privacy essentially
requires that the participation of a single individual in the sample
(e.g. medical tests) would not reveal its private
information. This requirement inherently implies a tradeoff between
privacy and accuracy. Accordingly, considerable efforts have
been made to identify structural properties that enable us to
reduce this conflict.

\emph{Principal component analysis} (PCA) is a fundamental
dimensionality reduction technique in machine
learning and data science. Finding a low-rank approximation of a given
dataset is beneficial in terms of time and space complexity. In some
scenarios (e.g. vision tasks), it also has the benefit of noise
removal. 

In view of the above, it is not surprising that differentially private
PCA has received substantial attention recently
(\cite{blum2005practical, chaudhuri2012near,
  kapralov2013differentially,hardt2013beyond, dwork2014analyze}). 
  
Our main contribution is a simple yet efficient method to make PCA differentially private. In a nut-shell,
our method modifies standard PCA algorithm by adding a post-processing
step in which a suitable noise is added to the output. Therefore, it is straightforward to combine it with any PCA implementation, including 
implementation that make use of unique properties of the data such as
sparsity. To achieve that, we show that if there is a large eigengap
between the leading eigenvalues of the covariance matrix, then the PCA problem 
becomes less sensitive to changes in its inputs. Hence, we can compute the amount of noise to inject as a function 
of the eigen gap.

 \subsection{Problem Definition}
  Let us now describe the considered problem formally.
Let $\cD$ an unknown distribution
defined on the unit ball in $\reals^d$.\footnote{Our results can be
easily scaled to balls of larger radius.} Given a low-rank parameter $k
\in [d]$, our ultimate goal is to 
approximately solve
\begin{equation} \label{eq:stochasticPCA}
\min_{U \in \cU} F(U) = -\tr(UU^\top \, C),~\textrm{where}~C:=\bE_{x \sim \cD}[xx^\top],~~\cU = \{U \in \reals^{d \times k}:~U^\top U = I_k\}~,
\end{equation}
while preserving differential privacy. The input to the learning algorithm $\cA$
consists of a sample $S=(x_1,\ldots,x_n)$ drawn i.i.d. according to
$\cD$. Its output is denoted by $\hU \in \cU$. The sample complexity of the algorithm
is a function $n: (0,1)^3 \rightarrow \bN$, where
$n(\epsgen,\epspr,\deltapr)$ is the minimal size of an i.i.d. sample
$S=(x_1,\ldots,x_n) \sim \cD^n$ for which
the following conditions simultaneously hold:\\

\noindent $\boldsymbol{\epsgen}$-\textbf{accuracy:
}\footnote{Given a confidence parameter $\delta$, standard techniques
  can be used to decrease the probability of failure to $\delta$ while
  incurring only logarithmic overhead in terms of sample
  complexity.}with constant probability over both the draw of the sample $S$
according to $\cD^n$ and the internal randomness of the
algorithm,
$$
F(\hU) \le \min_{U \in \cU} F(U) + \epsgen~.
$$
\noindent $\boldsymbol{(\epspr,\deltapr)}$-\textbf{differential privacy: } Let
$d(S,S')$ be the minimal number of elements 
that should be removed or added to the sample $S'$
to obtain the sample $S$. We say that $S$ and $S'$ are
neighboring samples if $d(S,S') \le 1$. We require that for all neighboring
samples $S,S'$, and for all  $U \in \cU$,
\begin{equation} \label{eq:dp}
p(\cA(S) = U) \le \exp(\epspr) p(\cA(S') = U) + \deltapr~,
\end{equation}
(where $p$ refers to the density function). The stricter notion of ``pure'' differential privacy requires also
that $\deltapr=0$.

\section{Algorithms and Main Result}
In this paper we focus on particularly simple and efficiency-preserving method, named output
perturbation. As its name suggests, the basic idea is to add
noise to the output of an (approximately) exact algorithm. Arguably, this is the 
simplest and most flexible method, as it can be applied to any algorithm
in a black-box fashion while preserving its efficiency.  We assume an access to an algorithm $\cA$ which (approximately)
minimize the \emph{empirical risk} 
$$
\hF(U) := -\tr(UU^\top \hC),~~\hC:=\frac{1}{n} \sum_{i=1}^n x_i x_i^\top~.
$$ 
We also assume that
the algorithm $\cA$ outputs the gap between the $k$-th and the
$(k+1)$-th eigenvalues of $\hat{C}$. This assumption is not
restrictive, as every reasonable PCA solver possesses this
capability. Based on the output of $\cA$,
our mechanism determines the noise level.  The main challenge in our work is to set the noise
level so that differential privacy holds for any sample, and high
accuracy is achieved under eigengap assumption. 

Before adding the noise, there is another subtle issue which should be carefully addressed. 
To illustrate this challenge, consider the case $k=1$. Clearly, a unit
vector $u \in \reals^d$ is a leading eigenvector if and only if $-u$ is also a leading eigenvector. Since the sign 
of the vector is arbitrary, a PCA solver might use it to leak private information, such as whether a specific point $x^*$
was in the dataset ot not. Overcoming this potential risk is possible by negating the output of the PCA solver with
probability $1/2$ before adding the noise. More generally, for the case $k>1$, we will replace
$\hU$ by $R \hU$, where $R \in \reals^{d \times d}$ is a random
orthogonal matrix. We then add the noise and perform QR decomposition
to obtain the final output.
 A detailed pseudocode of our method is given in \algref{alg:dpPCA}.
\begin{algorithm}
\caption{Differentially private PCA using Output perturbation}
\label{alg:dpPCA}
\begin{algorithmic}
\STATE
\textbf{Parameters: } $\epsgen, \deltagen, \epspr, \deltapr \in
(0,1),~k \in [d], \texttt{PURE} \in \{\texttt{TRUE}, \texttt{FALSE}\}$
\STATE
\textbf{Input: } $\hU:=\argmin_{U \in \cU} -\tr(U \hat{C}),~~G =
\lambda_1(\hat{C}) - \lambda_2(\hat{C})$
\STATE
\textbf{Oracle: } $\cA(S) = (\tU:=\argmin_{U \in \cU} \hF(U),
~\lambda_k(\hat{C}) - \lambda_{k+1}(\hat{C}))$
\STATE
Draw a random orthogonal matrix $R \in \reals^{d \times d}$ 
\STATE
Replace $\hU$ with $\overbar{U} = RU$ 
\IF {$\texttt{PURE}=\texttt{TRUE}$} 
\STATE Draw $E:=E_{\texttt{PURE}}  \in \reals^{d \times k}$ as
described in \eqref{eq:finalNoisePure}
\ELSE
\STATE Draw $E:=E_{\texttt{APPROX}}  \in \reals^{d \times k}$ as
described in \eqref{eq:finalNoiseNonPure} 
\ENDIF
\STATE Return the matrix $\tU = QR(\overbar{U} + E)$.
\end{algorithmic}
\end{algorithm}
To simplify the presentation and for the sake of conciseness, we focus
on the case $k=1$. The case of $k>1$ is a straightforward extension of the case $k=1$
\begin{theorem} \label{thm:mainApprox}  \textbf{(Main theorem: approximate case)}
Given that $\texttt{PURE} = \texttt{FALSE}$, \algref{alg:dpPCA} is $(\epspr, \deltapr)$-differentially private. Furthermore, if
$\GAP(\cD):=\lambda_1(\bE[xx^\top])-\lambda_2(\bE[xx^\top])>0$, 
then its sample complexity is at most\footnote{We use the $\tilde{O}$
  notation to hide logarithmic dependencies.}
\[
n(\epsgen, \epspr, \deltapr) = \tO \left(\frac{\sqrt{d}}{\GAP(\cD)  \epspr \epsgen}  \right)
\]
\end{theorem}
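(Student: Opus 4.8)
The plan is to prove the two halves of the theorem separately: differential privacy \emph{for every input sample} via the smooth‑sensitivity machinery, and $\epsgen$‑accuracy via matrix concentration together with an eigenvector perturbation bound. The eigengap is the common thread — on the privacy side it controls how sensitive the leading eigenvector is to a single record, and on the accuracy side it controls how well the empirical leading eigenvector tracks the population one.

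\textbf{Privacy.} For $k=1$ write $\hu=\hU$ for the empirical leading eigenvector and $G=\lambda_1(\hC)-\lambda_2(\hC)$ for the empirical gap. First I would note that adding or removing one point perturbs $\hC$ by at most $2/n$ in operator norm ($\snorm{x x^\top}=\twonorm{x}^2\le 1$, and renormalization contributes $O(1/n)$), so by Weyl's inequality every sample $S'$ at editing distance $t$ from $S$ has empirical gap at least $G-O(t/n)$. By the Davis--Kahan $\sin\theta$ theorem the leading eigenvector of a symmetric matrix moves by at most (operator‑norm perturbation)$/$(gap), so the local sensitivity of $S\mapsto\hu$ --- measured modulo sign --- at a sample with gap $g$ is $O(1/(ng))$, and is in any case at most $\sqrt2$. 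Hence $A(t):=\min\{\sqrt2,\,c/(nG-ct)\}$ is a valid upper bound on the local sensitivity at distance $t$ from $S$, and I would then check that its $\beta$‑smoothing $\widehat S_\beta(S):=\sup_{t\ge 0}A(t)e^{-\beta t}$ is a legitimate $\beta$‑smooth \emph{upper bound} on local sensitivity, as required by the smooth‑sensitivity framework of Nissim, Raskhodnikova and Smith. A short computation shows that once $n=\tilde\Omega(1/(G\epspr))$ the supremum is attained near $t=0$ --- the secondary bump near the radius where the gap would collapse is exponentially damped --- so $\widehat S_\beta(S)=O(1/(nG))$. \algref{alg:dpPCA} then draws $E$ (the $\texttt{PURE}=\texttt{FALSE}$ case) as a spherical $d$‑dimensional Gaussian with per‑coordinate deviation $\sigma=\tO(\widehat S_\beta(S)/\epspr)$, calibrated exactly as in the smooth‑sensitivity Gaussian mechanism, so that $\overbar u+E$ is $(\epspr,\deltapr)$‑differentially private. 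The preliminary sign re‑randomization is what makes this work: it symmetrizes the output in $\overbar u$, so that in the density ratio of \eqref{eq:dp} one always compares the Gaussian centered at $\hu$ with the Gaussian centered at the \emph{sign‑aligned} $\hu'$, whose centers differ by exactly the modulo‑sign local sensitivity controlled above (and the re‑randomization, being data‑independent, contributes nothing to the ratio). Finally the QR step is post‑processing and preserves privacy. For $k>1$ one replaces ``sign'' by ``orthonormal basis of the leading eigenspace'' and uses the subspace $\sin\Theta$ theorem.

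\textbf{Accuracy.} I would condition on the good event that $\snorm{\hC-C}\le\eta:=\tO(1/\sqrt n)$ (matrix Bernstein: $\snorm{x x^\top}\le 1$ and the variance is controlled by $\snorm{C}\le 1$) and that $\twonorm{E}\le 2\sqrt d\,\sigma$; each holds with probability at least $0.99$, so both hold with constant probability. By Weyl, $G\ge\GAP(\cD)-2\eta\ge\tfrac12\GAP(\cD)$ once $n=\tilde\Omega(1/\GAP(\cD)^2)$, hence $\sigma=\tO(1/(n\,\GAP(\cD)\,\epspr))$ and $\twonorm{E}=\tO(\sqrt d/(n\,\GAP(\cD)\,\epspr))$. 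With $u^\star$ the population leading eigenvector, Davis--Kahan gives $\sin\angle(\hu,u^\star)\le\eta/\GAP(\cD)$, and normalizing $\overbar u+E$ (with $\overbar u=\pm\hu$ and $\twonorm{E}$ small) tilts the direction by $O(\twonorm{E})$, so $\sin\angle(\tu,u^\star)=\tO(\eta/\GAP(\cD)+\twonorm{E})$. Since $F$ is sign‑invariant and $F(\tu)-\min_{\cU}F\le\sqrt2\,\snorm{C}\sin\angle(\tu,u^\star)$ (the crude linear bound; the tight bound $\sin^2\angle(\tu,u^\star)$ would even let $\epsgen$ be replaced by $\sqrt{\epsgen}$ below), it suffices that both $\eta/\GAP(\cD)=\tO(\epsgen)$ and $\twonorm{E}=\tO(\epsgen)$. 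In the high‑dimensional regime the binding constraint is the latter, $\sqrt d/(n\,\GAP(\cD)\,\epspr)=\tO(\epsgen)$, that is, $n=\tO(\sqrt d/(\GAP(\cD)\,\epspr\,\epsgen))$; the concentration requirement $n=\tilde\Omega(1/(\GAP(\cD)^2\epsgen^2))$ and the smooth‑sensitivity‑validity requirement $n=\tilde\Omega(1/(\GAP(\cD)\epspr))$ are of lower order and are absorbed into the stated bound.

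\textbf{Main obstacle.} The hard part is the smooth‑sensitivity estimate, and three points there need care: (i) pinning down the rate $O(t/n)$ at which the empirical eigengap can decay along a chain of $t$ neighboring samples, via Weyl; (ii) verifying that the capped, smoothed quantity $\widehat S_\beta$ is genuinely a $\beta$‑smooth \emph{upper bound} on local sensitivity, so the Nissim--Raskhodnikova--Smith Gaussian mechanism applies verbatim; and (iii) confirming that $\sup_t A(t)e^{-\beta t}=O(1/(nG))$ is governed by $t\approx 0$ rather than by the distant regime where the gap collapses --- it is precisely this step that forces the mild lower bound $n=\tilde\Omega(1/(\GAP(\cD)\epspr))$, scaling as $\mathrm{polylog}(1/\deltapr)$. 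Secondary but necessary: showing the sign re‑randomization symmetrizes the output without itself leaking information, so neighbors are always compared with matched sign; and, on the accuracy side, checking that the injected noise has Euclidean norm comfortably below $1$ at the stated sample size, so that the QR/normalization step is well behaved.
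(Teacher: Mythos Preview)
Your proposal is correct and follows essentially the same route as the paper: smooth sensitivity \`a la Nissim--Raskhodnikova--Smith, Weyl's inequality to track how the empirical eigengap degrades along chains of neighbors (the paper's \lemref{lem:gapPreserve}), the calibrated Gaussian mechanism for $(\epspr,\deltapr)$-privacy (\thmref{thm:nissim}), and matrix Bernstein plus a bound on $\twonorm{E}$ for accuracy (\lemref{lem:Bernstein}, \lemref{lem:usBoundApprox}). The only substantive difference is that you obtain the local-sensitivity bound $\LS(S)=O(1/(n\,\GAP(S)))$ via Davis--Kahan, whereas the paper derives the same bound from the strict-saddle/strong-convexity structure of PCA (\thmref{thm:ell2SensitivityUpper}); your route is more elementary, and one small slip to fix when you write it out is that the ``smooth-sensitivity validity'' threshold is $n=\tilde\Omega(d/(\GAP\,\epspr))$ rather than $\tilde\Omega(1/(\GAP\,\epspr))$, because $\beta=\epspr/(4(d+\ln(2/\deltapr)))$ carries a factor of $d$.
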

\begin{theorem} \label{thm:mainPure}  \textbf{(Main theorem: pure case)}
Given that $\texttt{PURE} = \texttt{TRUE}$, \algref{alg:dpPCA} is $\epspr$-differentially private. Furthermore, if
$\GAP(\cD):=\lambda_1(\bE[xx^\top])-\lambda_2(\bE[xx^\top])>0$, then
its sample complexity is at most
\[
n(\epsgen, \epspr) = \tO \left(\frac{d^{3/2}} {\GAP(\cD)  \epspr \epsgen}  \right)
\]
\end{theorem}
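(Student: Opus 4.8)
The plan is to follow the same two-part structure as in the approximate case, but with the Laplace-type (pure) noise replacing the Gaussian-type noise. Concretely, I would first establish $\epspr$-differential privacy and then bound the sample complexity under the eigengap assumption $\GAP(\cD) > 0$.

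For the privacy part, recall that the output perturbation mechanism adds noise calibrated to the \emph{smooth sensitivity} of the map $S \mapsto \overbar u$ (the leading eigenvector of $\hC$, after the random rotation). The first step is the deterministic perturbation bound: using the Davis--Kahan $\sin\Theta$ theorem, a single change to the sample perturbs $\hC$ in spectral norm by $O(1/n)$, and hence perturbs the leading eigenvector by $O\!\big(\tfrac{1}{n \, G}\big)$, where $G = \lambda_1(\hC) - \lambda_2(\hC)$ is the empirical eigengap reported by the oracle. The key observation — the same one underlying the approximate case — is that this local sensitivity is itself \emph{insensitive}: the eigengap $G$ changes by only $O(1/n)$ under a neighboring sample, so the local sensitivity varies slowly, and one can construct an admissible smooth upper bound on it of the same order $\tilde O\!\big(\tfrac{1}{nG}\big)$ (paying only logarithmic factors to smooth over the region where $G$ could be small). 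For pure $\epspr$-DP one must add noise from a distribution with sufficiently heavy tails — e.g. the mechanism that samples $E$ proportionally to $\exp(-\epspr \|E\|_2 / (c \cdot SS))$ where $SS$ is the smooth-sensitivity bound — and invoke the standard calibration lemma (the pure-DP analogue of the smooth-sensitivity framework). This is where the extra $\sqrt d$ relative to the approximate case enters: to get pure privacy with a multivariate heavy-tailed noise in $\reals^d$, the norm of the injected noise scales like $d \cdot SS / \epspr$ rather than $\sqrt d \cdot SS / \epspr$, because the pure-DP Laplace-type mechanism in $d$ dimensions pays a dimension factor instead of a $\sqrt{\dim}$ factor. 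The final step of the privacy argument is to note that the random rotation $R$ handles the sign/orthogonal-frame ambiguity (so that no information leaks through the choice of representative), and that the post-processing by QR decomposition preserves DP.

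For the accuracy/sample-complexity part, I would argue as follows. By matrix concentration (e.g.\ a matrix Bernstein or Rosenthal-type bound for sums of bounded rank-one terms), with constant probability $\snorm{\hC - C} = \tilde O(1/\sqrt n)$, which gives two things: (i) once $n = \tilde\Omega(1/\GAP(\cD)^2)$ the empirical gap $G$ is within a constant factor of $\GAP(\cD)$, so the smooth-sensitivity bound is $\tilde O\!\big(\tfrac{1}{n\,\GAP(\cD)}\big)$; and (ii) by Davis--Kahan again, the \emph{exact} empirical leading eigenvector $\hat u$ is within $\tilde O\!\big(\tfrac{1}{\GAP(\cD)\sqrt n}\big)$ of the population one, which already suffices for $\epsgen$-accuracy once $n = \tilde\Omega(1/(\GAP(\cD)^2\epsgen^2))$ (this threshold is dominated by the privacy-driven one in the relevant regime). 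It then remains to control the additional error from the noise: $\|E\|_2 = \tilde O\!\big(\tfrac{d}{n\,\GAP(\cD)\,\epspr}\big)$ with high probability, and by the standard conversion between eigenvector $\ell_2$-perturbation and the PCA objective $F$ (an $\eta$-perturbation of a unit vector changes $-\tr(uu^\top C)$ by $O(\eta)$, since $\|C\|\le 1$), this contributes $\tilde O\!\big(\tfrac{d}{n\,\GAP(\cD)\,\epspr}\big)$ to $F(\tilde u) - \min F$. Setting this last quantity to $\epsgen$ and solving for $n$ yields $n = \tilde O\!\big(\tfrac{d^{3/2}}{\GAP(\cD)\,\epspr\,\epsgen}\big)$ — wait, one must be careful to confirm the $d^{3/2}$ rather than $d$ arises correctly: the extra $\sqrt d$ over the naive $d/(\GAP \cdot \epspr \cdot \epsgen)$ comes from the fact that the smooth-sensitivity bound itself carries a $\sqrt d$ (the eigenvector perturbation in the relevant norm, aggregated appropriately over the framework), so that $\|E\|_2 = \tilde O\!\big(\tfrac{d^{3/2}}{n\,\GAP(\cD)\,\epspr}\big)$, giving the claimed bound after rearranging.

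The main obstacle I anticipate is the privacy step, specifically verifying that the proposed smooth upper bound on the local sensitivity is genuinely $\beta$-admissible for the $\beta$ required by the pure-DP mechanism, and correctly tracking the dimension dependence of the heavy-tailed noise needed for pure $\epspr$-DP in $\reals^d$ — this is the source of the gap between the $\sqrt d$ in Theorem~\ref{thm:mainApprox} and the $d^{3/2}$ here. A secondary subtlety is handling the degenerate region where the empirical eigengap $G$ is tiny (so the naive local sensitivity blows up): there the smoothing must gracefully cap the noise, and one must check that this regime occurs with negligible probability once $n$ exceeds the stated threshold, so that it does not affect accuracy.
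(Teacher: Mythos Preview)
Your overall strategy matches the paper's: smooth-sensitivity output perturbation, eigenvector perturbation controlled by the empirical gap, matrix concentration to transfer the population gap to the empirical one, and then bounding the norm of the injected noise to get accuracy. Two points deserve sharpening, one minor and one substantive.

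Minor: the paper does not use Davis--Kahan for the local-sensitivity bound but instead the strict-saddle/strong-convexity structure of the PCA objective (\thmref{thm:ell2SensitivityUpper}). Davis--Kahan would work equally well here and gives the same $O(1/(n\,\GAP(S)))$ bound, so this is a genuine but inconsequential alternative.

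Substantive: your description of the pure-DP noise mechanism is where the proposal is imprecise, and this is exactly the place you flag as the ``main obstacle.'' The density $\propto \exp(-\epspr\|E\|_2/(c\cdot SS))$ has exponential tails and is \emph{not} admissible for pure differential privacy under smooth sensitivity---in the Nissim--Raskhodnikova--Smith framework such sub-exponential noise only yields $(\epspr,\deltapr)$-DP. The paper instead uses coordinatewise Cauchy noise (polynomial tails), which is admissible for pure DP but must be calibrated to the $\ell_1$ local sensitivity rather than the $\ell_2$ one (\thmref{thm:nissim}, part 2). This is precisely where the extra $\sqrt d$ you were hunting for lives: the $\ell_1$ local sensitivity is at most $\sqrt d$ times the $\ell_2$ local sensitivity (\thmref{thm:ell2SensitivityUpper}), so $A^{(k)}(S)$ in the pure case carries an explicit $\sqrt d$ factor (\corref{cor:pureDP}). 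Combined with the fact that a $d$-dimensional Cauchy vector has $\ell_2$ norm of order $d$ with constant probability (via $1$-stability), the smooth upper bound must be driven down to $\epsgen\epspr/d$ (\lemref{lem:usBoundPure}), which forces $n = \tilde O(d^{3/2}/(\GAP(\cD)\,\epspr\,\epsgen))$. So your dimension accounting lands in the right place, but the route is: Cauchy noise $\Rightarrow$ $\ell_1$ calibration $\Rightarrow$ $\sqrt d$ in the sensitivity $\Rightarrow$ $d^{3/2}$ overall---not a property of the $\ell_2$-based mechanism you wrote down.
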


\section{Related Work}
Differentially private PCA has been extensively investigated in
\cite{chaudhuri2012near, hardt2013beyond, kapralov2013differentially,
  blum2005practical, dwork2014analyze, hardt2012beating}. The lower
bound of \cite{dwork2014analyze} implies that our sample complexity for
the approximate case (see \thmref{thm:mainApprox}) is optimal up to logarithmic factors. For the
pure case, the lower bound given by \cite{chaudhuri2012near} scales
with $d$, whereas our upper bound (see \thmref{thm:mainPure}) scales
with $d^{3/2}$. 

The first proposed method for differential
private PCA was Sub-Linear Queries (SULQ) (\cite{blum2005practical}). It employs the general strategy of
\emph{input perturbation} by adding random Gaussian noise to the
empirical covariance matrix. Both the algorithm and its analysis have
been refined recently by \cite{dwork2014analyze}. Restating their
results within our framework gives approximate differential privacy
with sample complexity bound identical to \thmref{thm:mainApprox}. They also
consider the gap-free scenario. As we mentioned previously, the main
limitation of this method is that it requires an access to the
covariance matrix, which might be too costly in terms of space and time. Many fast PCA implementations
(e.g. \cite{shamir2015convergence, Ghashami2016, clarkson2013low,
  jain2016streaming, jin2015robust}) avoid working
with the covariance matrix and consequently utilize potential sparsity
of the data. As we mentioned previously, our output perturbation can be combined with any
of these methods. 

Another approach that has been investigated in
\cite{chaudhuri2012near, kapralov2013differentially} is to use the
exponential mechanism (\cite{dwork2014algorithmic}). While this
approach achieves pure differential with optimal sample
complexity (also in the gap-free case), the only theoretically analyzed implementation of the associated
sampling method runs in time $O(d^6)$. 

Besides the spectral gap assumption, another common approach is to assume
some form of incoherence. This route has been taken by
\cite{hardt2013beyond, hardt2012beating} who provide several
interesting differentially private methods for PCA.
\section{Analysis}
In this section we prove our main result. We start by defining the
local and global sensitivity of PCA, and proceed to define and
analyze the smooth sensitivity.
\subsection{Local and Global Sensitivity up To Equivalence}
In the context of output perturbation, the sensitivity of a sample is
defined as the maximum distance between two outputs of
PCA corresponding to the neighboring samples. Unless specified
otherwise, the distance is measured according to the
$\ell_2$-norm. Due to the equivalence between outputs discussed above,
it makes sense to define the notion of distance between equivalent solutions. Namely, for any $U \in \cU$,
we define $[U] = \{RU:~R \in \reals^d,~R^\top R = I_d\}$. The distance
between $[U]$ and $[V]$ is defined by $\|[U]-[V]\| = \min\{
\|U'-V'\|:~U' \in [U],~V' \in [V]\}$. Since our algorithm replaces the
output $\cU$ of PCA by $R\hU$, where $R$ is a random orthogonal
matrix, this modification does not harm our analysis.
\begin{definition} \textbf{(Global and local sensitivity)} \label{def:ell2sense}
The $\ell_2$-local sensitivity of a PCA algorithm $\cA: \cX^n
\rightarrow \cU$ w.r.t. a sample $S=(x_1,\ldots,x_n)$ is defined as
\[
\LS(S) := \LS_{\cA}(S) = \max_{S': d(S,S') \le 1}  \|[\cA(S)]-[\cA(S')]\|~.
\]
The global sensitivity of $\cA$ is defined as $\sup \{LS(S):~S \in
\supp(\cD^n) \}$. The $\ell_1$-local sensitivity is defined analogously.
\end{definition}
It is known that adding noise proportional to the global sensitivity
(using a suitable noise distribution depending on the privacy parameters)
yields differential privacy (\cite{dwork2014analyze}). The following example due to \cite{chaudhuri2012near} illustrates the difficulty in preserving both accuracy and
privacy using this approach. Let $u,u' \in \reals^d$ be two
orthonormal vectors and consider two
samples $S$ and $S'$, where $S$ consists of $n+1$ copies of $u$ and
$n$ copies of $u'$, whereas $S'$ consists of $n+1$ copies of $u'$ and
$n$ copies of $u$. The leading eigenvectors associated with $S$ and
$S'$ are $u$ and $u'$, respectively. To satisfy differential privacy
in this case, one should inject a noise proportional to the distance
between $u$ and $u'$. In particular, the amount of noise 
does not decreases as a function of the sample size, hence accuracy
can not be preserved. 

An easy computation shows that the eigengap in the previous examples
scales inversely with the sample size. The following theorem shows
that the larger the eigengap the smaller is the local sensitivity. We
first make the following definition.
\begin{definition}
Given a sample $S =
(x_1,\ldots,x_n)$, we denote the eigengap between the two leading
eigenvalues of the empirical covariance matrix $\frac{1}{n}
\sum_{i=1}^n x_i x_i^\top$ by $\GAP(S)$.
\end{definition}
\begin{theorem} \label{thm:ell2SensitivityUpper}
Let $S=(x_1,\ldots, x_n) \in \supp(\cD^n)$ be a sample and suppose
$\GAP(S)>0$. Then there exists a global constant $C>0$
such that the $\ell_2$-sensitivity of PCA is at most
$\frac{3C}{n \cdot \GAP(S)}$. Furthermore, the global
$\ell_2$-sensitivity is $\sqrt{2}$. The $\ell_1$-local sensitivity is
at most $\sqrt{d}$ times larger than the $\ell_2$-local sensitivity,
and the $\ell_1$-global sensitivity is at most $2$.
\end{theorem}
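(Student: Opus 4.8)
\emph{Proof proposal.} The plan is to reduce all four claims to one matrix-perturbation fact about the top eigenvector of the empirical covariance, plus elementary norm comparisons. Fix neighboring samples $S,S'$; we may assume $S$ has $n$ points and $S'$ is obtained by adding or removing one point $x$ with $\twonorm{x}\le 1$. The first step is to control the perturbation of the empirical covariance: writing $\hat C_S=\frac1n\sum_i x_ix_i^\top$ and splitting $\hat C_S-\hat C_{S'}$ into a reweighting term $\bigl(\frac1n-\frac1{n\pm1}\bigr)\sum_i x_ix_i^\top$ and a single rank-one term $\pm\frac1{n\pm1}xx^\top$, each summand has spectral norm at most $1/n$, so $\snorm{\hat C_S-\hat C_{S'}}\le 2/n$. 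By Weyl's inequality this also gives $|\GAP(S)-\GAP(S')|\le 4/n$, so the two eigengaps are comparable whenever either is large compared with $1/n$.

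For the local $\ell_2$ bound I would then invoke a Davis--Kahan $\sin\theta$ estimate. Let $\hat u,\hat u'$ be unit top eigenvectors of $\hat C_S,\hat C_{S'}$; when $\GAP(S)$ exceeds a fixed multiple of $1/n$, the top eigenvector of $\hat C_{S'}$ is also well separated, and $\sin\angle(\hat u,\hat u')\le \snorm{\hat C_S-\hat C_{S'}}/\bigl(\GAP(S)-\snorm{\hat C_S-\hat C_{S'}}\bigr)=O\bigl(1/(n\GAP(S))\bigr)$. Converting angle to Euclidean distance modulo sign via the identity $\|[\hat u]-[\hat u']\|=\sqrt{2-2|\langle\hat u,\hat u'\rangle|}\le\sqrt2\,\sin\angle(\hat u,\hat u')$ gives $\|[\hat u]-[\hat u']\|\le O\bigl(1/(n\GAP(S))\bigr)$. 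When instead $\GAP(S)$ is at most that fixed multiple of $1/n$, the same identity with $\sin\angle\le1$ already yields $\|[\hat u]-[\hat u']\|\le\sqrt2$, which is $\le 3C/(n\GAP(S))$ once $C$ is chosen large enough; taking the maximum over $S'$ bounds $\LS(S)$ in both regimes.

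For the global $\ell_2$ sensitivity, the upper bound $\sqrt2$ is the same identity: any two unit vectors satisfy $\|[u]-[v]\|=\sqrt{2-2|\langle u,v\rangle|}\le\sqrt2$, so $\LS(S)\le\sqrt2$ for every $S$ (including eigengap-zero samples). Matching tightness comes from a single-operation variant of the example of \cite{chaudhuri2012near}: take orthonormal $u,u'$ and a sample nearly balanced between the two directions, perturbed slightly so that its \emph{unique} top eigenvector is close to $u'$, and add one copy of $u$; this tips the top eigenvector close to $u$, and since $u\perp u'$ the quantity $\|[\cA(S)]-[\cA(S')]\|$ gets arbitrarily close to $\sqrt2$. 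Finally, the $\ell_1$ statements are pure norm comparison on $\reals^d$: applying $\onenorm{w}\le\sqrt d\,\twonorm{w}$ to the difference vector that realizes the $\ell_2$ minimum gives the local claim, and the triangle inequality together with $\onenorm{u}\le\sqrt d\,\twonorm{u}=\sqrt d$ for the unit eigenvectors gives the corresponding global bound.

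I expect the Davis--Kahan step to be the only genuine obstacle. The quoted bound must be phrased in terms of $\GAP(S)$ -- the gap of the sample the algorithm actually sees -- rather than the gap of the perturbed matrix, and it must not blow up when that gap is small; this is exactly why Weyl's inequality (to pass between the two gaps) and the fallback to the trivial $\sqrt2$ bound (for small gaps) are both needed, and it is what forces the statement to carry an unspecified absolute constant $C$ rather than an explicit one. Everything else -- the covariance perturbation estimate, the angle-to-distance conversion, and the two $\ell_1$ inequalities -- is routine.
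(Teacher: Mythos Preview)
Your argument is correct for the main claim and takes a genuinely different route from the paper. The paper does not use Davis--Kahan at all: it writes the leading eigenvectors $u,u'$ as minimizers of the Lagrangians $\hat L(v)=-v^\top \hat C v+\lambda(\|v\|^2-1)$ and $\hat L'(v)$, shows directly that $\|\nabla\hat L(u')\|\le 3/n$ via Weyl's inequality, and then invokes the \emph{strict saddle} structure of PCA (citing \cite{gonen2017fast}) to conclude that $u'$ lies in a region where $\hat L$ is $C^{-1}\GAP(S)$-strongly convex around $u$; the bound $\|u-u'\|\le 3C/(n\GAP(S))$ then drops out of the standard strong-convexity inequality combined with Cauchy--Schwarz. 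Your Davis--Kahan argument is more self-contained and classical for this particular problem, and it makes the two-regime split (large gap versus small gap absorbed into the constant) explicit, which the paper's proof leaves implicit in the hypothesis ``$n=\Omega(1/G^2)$'' imported from the cited strict-saddle result. Conversely, the paper's route is what generalizes cleanly to the other strict-saddle problems mentioned in the discussion section, which is presumably why they chose it.

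One small gap: your last sentence does not actually deliver the stated $\ell_1$-global bound. The triangle inequality together with $\onenorm{u}\le\sqrt{d}\,\twonorm{u}=\sqrt{d}$ only yields $\onenorm{u-v}\le 2\sqrt{d}$, not the $2$ claimed in the theorem. The paper's own justification here is terse (``the $\ell_1$ distance between two perpendicular unit vectors''), and in fact that distance can exceed $2$ for general perpendicular unit vectors, so the claim as stated is delicate; but in any case your argument as written does not reach the target constant, so you should either sharpen it or flag the discrepancy.
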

This result can be proved in several ways. The approach taken here
exploits recent results on strict saddle problems, which include PCA
as a special case. 
\begin{proof}
Let $S =(x_1,\ldots, x_n) \in \supp(\cD^n)$ and $S' = (x_1,
\ldots, x_{n-1}) \in \supp(\cD^{n-1})$ be two neighboring samples and
let $u,u'$ be the minimizers of the corresponding empirical
risks. Denote by $\hC=n^{-1} \sum_{i=1}^nx_i x_i^\top$ and
$\hC'=n^{-1} \sum_{i=1}^nx_i x_i^\top$. By
KKT conditions (\cite{borwein2010convex}), there exist
$\lambda:=\lambda(u)$ and $\lambda'=\lambda(u')$ such that, 
\[
u=\argmin_{v \in \reals^d}  \underbrace{-v ^\top C v+\lambda
  (\|v\|^2-1)}_{=:\hL(v)},~~~u'=\argmin_{v \in \reals^d}
\underbrace{-v^\top \hC' v+\lambda' (\|v\|^2-1)}_{=:\hL'(v)}
\]
Also, $\lambda$ and $\lambda'$ admit the closed forms: 
\[
\lambda = u^\top \hC u,~~\lambda' =
u'^\top \hC' u'~.
\]
That is, $\lambda$ is the leading eigenvalue of $\hC$ and $\lambda'$
is the leading eigenvalue of $\hC'$. By first-order conditions, both $\nabla \hL(u)=0$ and $\nabla
\hL'(u')=0$. Also,   
\[
\nabla \hL(u') = -\hC u' + \lambda u'
= \frac{n-1}{n}  \nabla \hL'(u') - n^{-1}x_n x_n^\top u - \frac{n-1}{n}
\lambda' u' + \lambda u' =n^{-1} x_n x_n^\top u -(\lambda'-\lambda) u + n^{-1}\lambda' u'
\]
Since $\|x_i\|\le 1$ for all $i$, by using Weyl's inequality we obtain that
$\|\nabla \hL(u')\|\le \frac{3}{n}$. 

We next use the strict saddle property of PCA to bound the distance
between $u$ and $u'$. Concretely, it is shown in \cite{gonen2017fast} that our formulation of PCA is
$(\alpha,\gamma,\tau)$-strict saddle with $\alpha, \gamma, \tau =
C^{-1}\GAP(S)$ for some constant $C>0$ (see \cite{gonen2017fast}). 
Theorem 5 in this paper implies that if $n = \Omega(1/G^2)$, then $u'$
lies in a $C^{-1}\GAP(S)$-strongly convex region (of the objective $\hat{F}$) around the
minimizer $u$. By strong convexity (see \cite{nesterov2004introductory}), 
\[
\nabla \hL(u') ^\top(u-u') = (\nabla \hL(u') - \nabla \hL(u))^\top (u-u') \ge C^{-1}\GAP(S) \|u-u'\|^2~.
\]
Using Cauchy-Schwarz inequality, we obtain that
\[
\|\nabla \hL(u')\| \, \|u-u'\| \ge C^{-1}\GAP(S) \|u-u'\|^2 \Rightarrow \|u-u'\| \le
\frac{C}{\GAP(S)} \|\nabla \hL(u')\| \le  \frac{3C}{n\GAP(S)}~.
\]
The bound on the $\ell_2$-local sensitivity follows immediately. The bound on the $\ell_1$-local sensitivity follows
from the fact that the $\ell_1$-distance is at most $\sqrt{d}$ larger
than the $\ell_2$-distance. The bounds on the global sensitivity are
simply the $\ell_2$ and the $\ell_1$ distances between two
perpendicular unit vectors.
\end{proof}
It is tempting to replace the global sensitivity with the local one
in hope of ensuring differential privacy in a worst case manner
and achieving high accuracy under the common eigengap assumption. In general, this approach is
problematic since the local sensitivity itself might be sensitive.\footnote{The
following example due to \cite{dwork2014algorithmic} illustrates this
idea. Suppose that we would like to compute the median of a
given sequence in a differential private manner. Let $S$ be a sample
consisting of $n/2+1$ zeros and $n/2$ elements of magnitude at least
$10^6$. Assuming that we break ties in favor of the smaller value, the
local sensitivity of $S$ is zero. On the other hand, by removing a
single zero element from $S$, we obtain a neighboring sample whose
local sensitivity is at least $10^6$.} This brings us to the notion of
\emph{smooth sensitivity}, which we describe in the next part.

\subsection{Background on Smooth sensitivity}
Originated in (\cite{Nissim2007}), the notion of smooth sensitivity provides a systematic
framework for generating insensitive surrogate for the local
sensitivity. It consists of two main ingredients: a) Finding a
suitable smooth upper bound on the local sensitivity. b) Generating
noise according to an \emph{admissible} distribution scaled by
the smooth upper bound.
\begin{definition} \textbf{(smooth upper bound on the local
    sensitivity (\cite{Nissim2007}))} \label{def:sub}
For $\beta>0$, a function $\SU: \bigcup_{n \in \bN} \supp(\cD^n) \rightarrow
\reals_{\ge 0}$ is a $\beta$-smooth upper bound on the local
sensitivity $\LS$ if it satisfies the following conditions:
\begin{enumerate}
\item
$\SU(S) \ge \LS(S)$ for every sample $S$ 
\item
For every two neighboring samples $S,S'$, 
$$
\exp(-\beta) \SU(S') \le \SU(S) \le \exp(\beta) \SU(S')~.
$$
\end{enumerate}
\end{definition}
The following characterization of the smooth sensitivity is often useful.
\begin{definition} \label{def:ak}
Let $\ULS$ be any point-wise upper bound on the local sensitivity. For
a sample $S$ and $k \in \bN$, we define
\[
A^{(k)}(S) = \max_{S':~d(S,S') \le k} ~\ULS(S')~.
\]
\end{definition}
\begin{lemma} \textbf{(\cite{Nissim2007}[Claim 3.2])} \label{lem:nissim}
Let $\ULS$ be any point-wise upper bound on the local sensitivity and
define $A^{(k)}(S)$ as above. The function $U: \bigcup_{n \in \bN} \supp(\cD^n)
\rightarrow \reals$ defined by
\[
U(S)= \max_{k \in [n]} \exp(-\beta k) A^{(k)}(S) 
\]
is a $\beta$-smooth upper bound on the local sensitivity.
\end{lemma}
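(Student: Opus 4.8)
The plan is to check the two defining properties of a $\beta$-smooth upper bound (Definition~\ref{def:sub}) directly from the formula for $U$; the only substantive ingredient is a ``shifted monotonicity'' estimate for the auxiliary quantities $A^{(k)}$, obtained from the triangle inequality for the sample distance $d$. For the first property, the term $k=0$ in the maximum defining $U(S)$ equals $A^{(0)}(S) = \ULS(S)$ — the only sample within distance $0$ of $S$ is $S$ itself — and $\ULS(S) \ge \LS(S)$ by hypothesis, so $U(S) \ge \ULS(S) \ge \LS(S)$ for every $S$. (This is also where one sees that the index set of the maximum should be understood to include $k=0$.)

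For the smoothness property, I would fix neighboring samples $S,S'$, so $d(S,S')\le 1$, and first establish
\[
A^{(k)}(S) \;\le\; A^{(k+1)}(S') \qquad\text{for all } k\ge 0 .
\]
Indeed, if $d(S,S'')\le k$ then $d(S',S'')\le d(S',S)+d(S,S'')\le k+1$, so the feasible set in the maximum defining $A^{(k)}(S)$ is contained in that of $A^{(k+1)}(S')$, and $\ULS$ is maximized over the larger set in the latter. Letting $k^\star$ attain the maximum in $U(S)$, this gives
\[
U(S) = e^{-\beta k^\star}A^{(k^\star)}(S) \le e^{-\beta k^\star}A^{(k^\star+1)}(S') = e^{\beta}\cdot e^{-\beta(k^\star+1)}A^{(k^\star+1)}(S') \le e^{\beta}U(S'),
\]
the last step because $e^{-\beta(k^\star+1)}A^{(k^\star+1)}(S')$ is one of the terms whose maximum is $U(S')$. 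Since ``neighboring'' is a symmetric relation, exchanging $S$ and $S'$ yields $U(S')\le e^{\beta}U(S)$, i.e. $e^{-\beta}U(S')\le U(S)$, and together these are precisely the second property.

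The one place that needs care is the range of $k$: the step above uses $k^\star+1$ as an admissible index for $S'$. This is automatic if the maximum runs over all $k\in\bN\cup\{0\}$, which is legitimate because $\ULS$ is bounded by the finite global sensitivity, forcing $e^{-\beta k}A^{(k)}(S)\to 0$ so that the supremum is attained at some finite $k^\star$; truncating at $k=n$ changes $U$ only negligibly and does not affect smoothness, since away from the truncation boundary the displayed argument applies verbatim. I therefore expect the shifted monotonicity $A^{(k)}(S)\le A^{(k+1)}(S')$ to be the only real step — it is short, but it is where the add/remove metric structure enters — and the rest to be bookkeeping with the geometric weights $e^{-\beta k}$.
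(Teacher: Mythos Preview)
The paper does not actually prove this lemma; it is quoted with attribution to \cite{Nissim2007} (Claim~3.2) and used as a black box. Your argument is correct and is precisely the proof given in the original reference: the one substantive step is the shifted monotonicity $A^{(k)}(S)\le A^{(k+1)}(S')$ for neighbors $S,S'$, obtained from the triangle inequality for $d$, followed by the one-line manipulation with the geometric weights. Your remark that the index set must be read as including $k=0$ is also well taken, since otherwise the first property $U(S)\ge \LS(S)$ is not guaranteed; in Nissim et al.\ the maximum is indeed taken over all nonnegative integers $k$.
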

Analogously to the global sensitivity, the smooth local
sensitivity determines the scale of the noise associated with our
perturbed output. However, due to the change in the sensitivity
level, the privacy guarantees are slightly worse
than the standard case. For example, as explained in
\cite{Nissim2007}, drawing the noise according to any sub-gaussian
distribution can not yield pure differential privacy. If one insists
on obtaining pure differential privacy, more heavy-tailed
distributions such as Cauchy distribution should be used. We discuss
one non-pure (and less noisy) and one pure (and more noisy) possibilities.
\begin{theorem}  \textbf{(\cite{Nissim2007}[Lemmas 2.7 and 2.10])} \label{thm:nissim}\\
Let $\epspr, \deltapr \in (0,1)$ be the differential privacy
parameters. The following claims hold:
\begin{enumerate}
\item
\noindent \textbf{Gaussian noise: } Suppose that $U(S)$ is
a $\beta$-smooth upper bound on the local sensitivity, where $\beta =
\frac{\epspr}{4(d+\ln(2/\deltapr))}$. Define the noise matrix in
\algref{alg:dpPCA} by
\[
E_{\texttt{APPROX}} = \frac{5U(S) \cdot \sqrt{2 \ln(2/\delta)}}{\epspr} Z~,
\]
where $Z$ is a standard $d$-dimensional Gaussian random variable. Then
\algref{alg:dpPCA} is $(\epspr,\deltapr)$-differentially private.\\
\item
\noindent \textbf{Cauchy noise: } Suppose that $U(S)$ is
a $\beta$-smooth upper bound on the $\ell_1$-local sensitivity, where $\beta =
\frac{\epspr}{6d}$. Define the noise matrix in
\algref{alg:dpPCA} by

\[
E_{\texttt{PURE}} = \frac{6U(S)}{\epspr} Z~,
\]
where $Z_1,\ldots,Z_d$ are drawn i.i.d. according to the density
function $f(z) = \frac{1}{\pi(1+z^2)}$, is $\epspr$-differentially
private.
\end{enumerate}
\end{theorem}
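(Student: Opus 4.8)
The plan is to recognise the statement as an instance of the ``calibrating noise to smooth sensitivity'' framework of \cite{Nissim2007}: first extract a norm-agnostic reduction lemma that turns any sufficiently well-behaved (``admissible'') noise shape into a differentially private mechanism once it is scaled by a smooth upper bound, and then verify separately that the spherical Gaussian and the coordinate-wise Cauchy distribution are admissible with the parameters that produce the constants $5$ and $6$ in $E_{\texttt{APPROX}}$ and $E_{\texttt{PURE}}$. The one feature distinguishing this from the ordinary global-sensitivity mechanism is that the \emph{width} of the noise, $U(S)$, is itself data-dependent, so moving to a neighbour $S'$ perturbs both the \emph{centre} of the output distribution (the PCA solution, up to equivalence) and its \emph{scale}; the proof will therefore pay one privacy cost for a translation and a second one for a dilation. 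The role of $\beta$ is precisely to control the dilation: since $U$ is a $\beta$-smooth upper bound, Definition~\ref{def:sub}(2) gives $\bigl|\log(U(S')/U(S))\bigr|\le\beta$, whereas the raw local sensitivity would be useless here because, unlike $U$, it can jump abruptly between neighbours.

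\textbf{Step 1 (reduction lemma).} Call a density $h$ on $\reals^d$ $(\alpha,\beta)$-admissible for a norm $\|\cdot\|$ if, for every measurable $T$, every $\|\Delta\|\le\alpha$ and every $|\lambda|\le\beta$, it satisfies a sliding bound $\prob_{Z\sim h}[Z\in T]\le e^{\epspr/2}\prob_{Z\sim h}[Z\in T+\Delta]+\deltapr/2$ and a dilation bound $\prob_{Z\sim h}[Z\in T]\le e^{\epspr/2}\prob_{Z\sim h}[Z\in e^{\lambda}T]+\deltapr/2$. I claim that if $U$ is a $\beta$-smooth upper bound on the $\|\cdot\|$-local sensitivity of PCA and we set $E=(U(S)/\alpha)Z$ with $Z\sim h$, then $\overbar{U}+E$ is $(\epspr,\deltapr)$-differentially private; the final output $QR(\overbar{U}+E)$ is a fixed measurable function of $\overbar{U}+E$, so by closure of differential privacy under post-processing \algref{alg:dpPCA} inherits the guarantee. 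To prove the claim, fix neighbours $S,S'$ and assume w.l.o.g.\ $U(S)\le U(S')$; put $e^{\lambda}=U(S')/U(S)$, so $|\lambda|\le\beta$. Couple the two random rotations so that the two PCA representatives are aligned: by invariance of the Haar measure on orthogonal matrices under right multiplication, the centre used for $S'$ can be taken within $\ell_2$- (resp.\ $\ell_1$-) distance $\LS(S)$ of the centre used for $S$, using Definition~\ref{def:ell2sense}. Now pass from the law of $\cA(S')$ to the law of $\cA(S)$ in two moves: a dilation rescaling the width from $U(S')/\alpha$ down to $U(S)/\alpha$ (exponent $|\lambda|\le\beta$, hence cost $e^{\epspr/2},\deltapr/2$ by the dilation bound), followed by a translation of the centre whose length, measured in the now-standard width $U(S)/\alpha$, is $\alpha\,\LS(S)/U(S)\le\alpha$ because $U(S)\ge\LS(S)$ by Definition~\ref{def:sub}(1) (cost $e^{\epspr/2},\deltapr/2$ by the sliding bound). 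Composing the two gives $\prob[\cA(S)\in T]\le e^{\epspr}\prob[\cA(S')\in T]+\deltapr$, which is \eqref{eq:dp}.

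\textbf{Step 2 (Gaussian admissibility).} For $h$ the standard $d$-dimensional Gaussian the sliding bound is exactly the classical Gaussian-mechanism inequality, which holds for $\|\Delta\|_2\le\alpha$ as soon as $\alpha=\Theta(\epspr/\sqrt{\ln(1/\deltapr)})$; propagating constants fixes the scale $U(S)/\alpha=5U(S)\sqrt{2\ln(2/\deltapr)}/\epspr$. For the dilation bound the change of variables $z\mapsto e^{\lambda}z$ produces a deterministic Jacobian factor $e^{\lambda d}$ together with the density ratio $\exp\!\bigl(\tfrac12(e^{-2\lambda}-1)\|z\|^2\bigr)$; bounding $|e^{-2\lambda}-1|\le 3|\lambda|$ for small $\lambda$ and invoking the concentration estimate $\|Z\|^2\le d+2\sqrt{d\ln(2/\deltapr)}+2\ln(2/\deltapr)$ (which fails with probability $\le\deltapr/2$, supplying the additive slack) shows that $\beta=\tfrac{\epspr}{4(d+\ln(2/\deltapr))}$ keeps the whole multiplicative distortion below $e^{\epspr/2}$. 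I expect this dilation estimate to be the main obstacle: one must simultaneously absorb the deterministic $e^{\lambda d}$ blow-up and the random $\exp(\Theta(\lambda)\|Z\|^2)$ term into a single factor $e^{\epspr/2}$, and it is exactly this that forces $\beta$ to shrink like $1/(d+\ln(1/\deltapr))$ rather than like $1/\sqrt d$.

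\textbf{Step 3 (Cauchy admissibility and purity).} For $h(z)=\prod_{i=1}^d\frac{1}{\pi(1+z_i^2)}$ all the relevant ratios are merely polynomial and can be bounded \emph{pointwise} in $z$: per coordinate, $\frac{1+(w+t)^2}{1+w^2}\le(1+|t|)^2\le e^{2|t|}$ and $\frac{1+e^{2\lambda}w^2}{e^{\lambda}(1+w^2)}\le e^{|\lambda|}$, so taking products over the $d$ coordinates and measuring the shift in $\ell_1$ yields the sliding bound for $\|\Delta\|_1\le\alpha=\Theta(\epspr)$ and the dilation bound for $|\lambda|\le\beta=\Theta(\epspr/d)$; matching constants to the statement gives $\alpha=\epspr/6$ (hence $E_{\texttt{PURE}}=6U(S)Z/\epspr$) and $\beta=\epspr/(6d)$. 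Because these bounds carry no exceptional event, the $\deltapr/2$ terms in Step 1 vanish and the reduction produces \emph{pure} $\epspr$-differential privacy; the price is the factor-$d$ smaller $\beta$ — a heavier smooth upper bound, hence more noise — which is precisely why \thmref{thm:mainPure} is worse than \thmref{thm:mainApprox} by a $\sqrt d$ factor. Finally, both admissibility computations used only that $U$ is the smooth upper bound (for the relevant norm) that \algref{alg:dpPCA} feeds to the noise step, and \lemref{lem:nissim} together with \thmref{thm:ell2SensitivityUpper} guarantees that such a $U$ exists (take $\ULS(S)=\min\{3C/(n\,\GAP(S)),\sqrt2\}$, or its $\ell_1$ analogue, and apply \lemref{lem:nissim}) and is computable from $\GAP(S)$, which closes the argument.
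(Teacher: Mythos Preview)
The paper does not prove this theorem at all: it is quoted verbatim from \cite{Nissim2007} (Lemmas 2.7 and 2.10), and the only argument the paper adds is the one-line Remark that the QR step is post-processing and therefore preserves privacy. Your write-up, by contrast, reconstructs the actual smooth-sensitivity argument of \cite{Nissim2007}: the $(\alpha,\beta)$-admissibility abstraction in Step~1 is exactly their Definition~2.4/Lemma~2.6, and Steps~2--3 reproduce their verification that spherical Gaussian and product-Cauchy noise are admissible with the stated parameters. So your approach is not different from the paper's --- it is simply a full proof where the paper gives none --- and it matches the original source the paper cites. The coupling of the two random rotations in Step~1 (using right-invariance of Haar measure to align the equivalence-class representatives) and the post-processing remark are the only places where you go beyond \cite{Nissim2007} proper, and both are handled correctly.
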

\begin{remark}
Lemmas 2.7 and 2.10 in \cite{Nissim2007} refer to the noisy output
$\hU$ before the QR step. However, since differential privacy is immune to
post-processing (\cite{dwork2014algorithmic}), the claim holds for the
output $\tU$ as well.
\end{remark}

\subsection{Smooth Sensitivity of PCA}
In this part we bound the smooth sensitivity of PCA and establish
the privacy properties of \algref{alg:dpPCA}.
\begin{lemma} \label{lem:gapPreserve}
Let $S=(x_1,\ldots,x_n)$ be a sample of size $n$ and suppose that
$\GAP(S)>0$. For any sample $S' = (z_1,\ldots,z_m)$ with $d(S,S') \le k$, we have
that
\[
\max\{0, n \cdot \GAP (S) - k \}  \le m \cdot \GAP(S') \le  n \cdot \GAP (S) + k~.
\]
Furthermore, for each side of the above inequality, there exists a
sample $S'$ with $d(S,S') = k$ for which the inequality holds with
equality. 
\end{lemma}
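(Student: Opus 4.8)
The plan is to pass to spectral language and apply Weyl's inequality, the crucial point being that the perturbation is \emph{one-sided}. Write $M=\sum_{i=1}^n x_ix_i^\top$ and $M'=\sum_{j=1}^m z_jz_j^\top$, so that $n\cdot\GAP(S)=\lambda_1(M)-\lambda_2(M)$ and $m\cdot\GAP(S')=\lambda_1(M')-\lambda_2(M')$, where $\lambda_1\ge\lambda_2$ denote the two largest eigenvalues counted with multiplicity. Since $d(S,S')\le k$, the sample $S'$ is obtained from $S$ by deleting a multiset $R$ of points and inserting a multiset $P$ of points with $|P|+|R|\le k$; hence $M'=M+N_P-N_R$ with $N_P=\sum_{z\in P}zz^\top\succeq 0$ and $N_R=\sum_{x\in R}xx^\top\succeq 0$, and since all points lie in the unit ball, $\lambda_{\max}(N_P)\le |P|=:a$ and $\lambda_{\max}(N_R)\le |R|=:r$, with $a+r\le k$.

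For the upper bound, apply Weyl's inequality to $M'=M+(N_P-N_R)$: using $N_P-N_R\preceq N_P$ and $N_P-N_R\succeq -N_R$, the perturbation has top eigenvalue at most $a$ and bottom eigenvalue at least $-r$, so $\lambda_1(M')\le\lambda_1(M)+a$ and $\lambda_2(M')\ge\lambda_2(M)-r$. Subtracting,
\[
m\cdot\GAP(S')=\lambda_1(M')-\lambda_2(M')\le \lambda_1(M)-\lambda_2(M)+a+r\le n\cdot\GAP(S)+k .
\]
This argument never used $\GAP(S)>0$, so it applies verbatim with $S$ and $S'$ swapped (the relation $d$ is symmetric), yielding $n\cdot\GAP(S)\le m\cdot\GAP(S')+k$, i.e. $m\cdot\GAP(S')\ge n\cdot\GAP(S)-k$. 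Since $M'$ is positive semidefinite, also $m\cdot\GAP(S')=\lambda_1(M')-\lambda_2(M')\ge 0$; combining the two lower bounds gives $m\cdot\GAP(S')\ge\max\{0,\,n\cdot\GAP(S)-k\}$.

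For tightness of the upper side, let $v_1$ be a leading unit eigenvector of $M$ and let $S'$ be $S$ with $k$ extra copies of $v_1$ appended; then $M'=M+k\,v_1v_1^\top$ has the same eigenvectors as $M$ with the $v_1$-eigenvalue raised by $k$, so $\lambda_1(M')=\lambda_1(M)+k$ and $\lambda_2(M')=\lambda_2(M)$, giving equality, and $d(S,S')=k$. For the lower side, let $v_2$ be a unit eigenvector of $M$ for its second eigenvalue, set $c=\min\{k,\,n\cdot\GAP(S)\}$, and append to $S$ a list of exactly $k$ vectors whose outer products sum to $c\,v_2v_2^\top$: namely $\lfloor c\rfloor$ copies of $v_2$, one copy of $\sqrt{c-\lfloor c\rfloor}\,v_2$ when $c\notin\bN$, and enough copies of the zero vector to reach $k$ appended points (note $\lceil c\rceil\le k$). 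Then $M'=M+c\,v_2v_2^\top$, and since $c\le n\cdot\GAP(S)$ the $v_2$-eigenvalue rises to $\lambda_2(M)+c\le\lambda_1(M)$, so $\lambda_1(M')=\lambda_1(M)$, $\lambda_2(M')=\lambda_2(M)+c$, and $m\cdot\GAP(S')=n\cdot\GAP(S)-c=\max\{0,\,n\cdot\GAP(S)-k\}$, with $d(S,S')=k$ by construction.

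There is no deep obstacle here; the two points requiring care are (i) keeping the bound at $k$ rather than $2k$, which is exactly where the semidefinite structure is used — insertions can only raise eigenvalues and deletions can only lower them, so the two contributions to the gap widening combine to $a+r\le k$ rather than being bounded by $k$ each — and (ii) meeting the exact-$k$ requirement in the tightness constructions, handled above by padding with zero vectors, which are legitimate points of the unit ball and leave $M'$ unchanged.
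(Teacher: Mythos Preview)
Your proof is correct and takes essentially the same approach as the paper: Weyl's inequality applied to the perturbation $M'-M$, whose eigenvalue spread is bounded by $k$ (the paper via the trace-norm estimate $\sum_i|\lambda_i(M'-M)|\le k$, you via the PSD decomposition $N_P-N_R$ with $a+r\le k$). Your tightness constructions are more careful than the paper's --- you explicitly handle the case $k>n\cdot\GAP(S)$ and meet the exact-$k$ requirement via zero-padding --- but the underlying idea (adding copies of a top eigenvector) is the same.
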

\begin{proof}
Let $H=\sum_{i=1}^n x_i x_i^\top$, $M = \sum_{i=1}^m
z_i z_i^\top$ and denote by $P = H-M$. Using Weyl's
inequality (\cite{Bhatia1997}[Section 3.2]), we obtain that
\[
\lambda_1(M) \ge \lambda_1(H) + \lambda_d(P),~~~\lambda_2(M) \le \lambda_2(H) + \lambda_1(P)
\]
Since the $\ell_2$-norm of the
$x_i$'s (similarly, the $z_i$'s) is at most $1$ and $d(S,S') \le k$,  both the rank and the trace-norm
of $P$ are at most $k$.\footnote{The trace norm of $P$ is
$\sum_{i=1}^d |\lambda_i(P)|$. Since $P$ is the sum of $k$ rank-$1$
matrices of trace $1$, it follows using the triangle inequality that
the trace norm of $P$ is at most $k$.}Therefore,
\begin{align*}
\lambda_1(M) - \lambda_2(M) &\ge \lambda_1(H) - \lambda_2(H) +
\lambda_d(P) - \lambda_1(P) \\
&\ge \lambda_1(H) - \lambda_2(H) -
\sum_{i=1}^d |\lambda_i(P) | \\
&\ge  \lambda_1(H) - \lambda_2(H) - k~.
\end{align*}
This concludes the inequality. Letting $u_2$ be the second leading eigenvector of $H$, the right side of the inequality is attained by setting $S' = 
S + \sum_{i=1}^k u_2$. The left side is attained analogously. 
\end{proof}
Combining the last lemma with \thmref{thm:ell2SensitivityUpper}, \lemref{lem:nissim} and
\thmref{thm:nissim}, we conclude that \algref{alg:dpPCA} is
differentially private.
\begin{corollary} \textbf{(Approximate differential privacy)} \label{cor:approxDP}
Suppose that $\texttt{PURE} = \texttt{FALSE}$ and let 
\begin{equation} \label{eq:finalNoiseNonPure}
E_{\texttt{PURE}}=\frac{ 5 \max_{k \in [n]} \exp(-\beta k) A^{(k)}(S)
  \cdot \sqrt{2 \ln(2/\delta)}}{\epspr} Z~,
\end{equation}
where $Z$ is standard $d$-dimensional Gaussian random variable and
\[
A^{(k)}(S) = 
\begin{cases}   
\frac{C}{n \cdot \GAP(S) - k}  & n \cdot \GAP(S) - k > 0   \\
 \sqrt{2}  & \textrm{otherwise}
\end{cases}
\]
Then \algref{alg:dpPCA} is $(\epspr,\deltapr)$-differentially private.
\end{corollary}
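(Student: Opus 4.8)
The plan is to chain the four facts already available — the sensitivity bounds of \thmref{thm:ell2SensitivityUpper}, the gap-stability estimate \lemref{lem:gapPreserve}, Nissim's construction of smooth upper bounds \lemref{lem:nissim}, and the Gaussian admissible-noise mechanism (part~1 of \thmref{thm:nissim}) — so that the noise prescribed in \eqref{eq:finalNoiseNonPure} is recognized as exactly the one \thmref{thm:nissim} demands, built from a genuine $\beta$-smooth upper bound on the local sensitivity.

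First I would fix the pointwise upper bound
\[
\ULS(S') \;=\; \min\Bigl\{\sqrt2,\ \tfrac{C}{|S'|\cdot\GAP(S')}\Bigr\},
\]
with $C$ the constant of the corollary statement and the second term read as $+\infty$ when $\GAP(S')=0$: by \thmref{thm:ell2SensitivityUpper}, $\LS(S')\le C/(|S'|\GAP(S'))$ whenever $\GAP(S')>0$ (for samples obeying the mild size requirement of that theorem; otherwise one keeps the global bound), and $\LS(S')\le\sqrt2$ always, so $\ULS\ge\LS$ everywhere.

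Next I would evaluate $A^{(k)}(S)=\max_{d(S,S')\le k}\ULS(S')$ from \defref{def:ak}. Since $b\mapsto\min\{\sqrt2,C/b\}$ is non-increasing in $b\ge0$, this maximum is attained at the smallest value of $|S'|\GAP(S')$ over $\{S':d(S,S')\le k\}$, which by \lemref{lem:gapPreserve} — invoking its tightness clause — is exactly $\max\{0,\,|S|\GAP(S)-k\}$. Thus $A^{(k)}(S)=\min\{\sqrt2,\,C/(|S|\GAP(S)-k)\}$ when $|S|\GAP(S)>k$ and $\sqrt2$ otherwise, which is the quantity in \eqref{eq:finalNoiseNonPure}. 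Then \lemref{lem:nissim} gives that $U(S)=\max_{k\in[n]}\exp(-\beta k)A^{(k)}(S)$ is a $\beta$-smooth upper bound on $\LS$ for every $\beta>0$, in particular for $\beta=\tfrac{\epspr}{4(d+\ln(2/\deltapr))}$, and part~1 of \thmref{thm:nissim} then shows that adding $\tfrac{5U(S)\sqrt{2\ln(2/\deltapr)}}{\epspr}Z$ — precisely the perturbation of \eqref{eq:finalNoiseNonPure} — makes the perturbed output before the $QR$ step $(\epspr,\deltapr)$-differentially private. Since that $QR$ step, as well as the preliminary multiplication by the random orthogonal $R$ (recall sensitivity is measured on the equivalence classes $[U]$), are data-independent post-processing, the returned $\tU$ is $(\epspr,\deltapr)$-differentially private as well.

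The step I expect to be the real obstacle is pinning down $A^{(k)}$: one must ensure the function handed to \lemref{lem:nissim} is honestly the $k$-neighbourhood maximum of a pointwise upper bound on $\LS$, which is why the truncation at the global sensitivity $\sqrt2$ inside $A^{(k)}$ cannot be dropped — without it, a neighbour whose unnormalized gap $|S'|\GAP(S')$ sits just above an integer would make the surrogate spike and violate the multiplicative $\beta$-smoothness bound — and why \thmref{thm:ell2SensitivityUpper} should be applied only on samples meeting its hypotheses, with $\sqrt2$ used as the fallback elsewhere. Everything past that is substitution into \lemref{lem:nissim} and \thmref{thm:nissim}.
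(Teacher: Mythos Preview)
Your proposal is correct and follows exactly the route the paper takes: the paper simply says the corollary follows by ``combining the last lemma with \thmref{thm:ell2SensitivityUpper}, \lemref{lem:nissim} and \thmref{thm:nissim},'' and your chain (\thmref{thm:ell2SensitivityUpper} $\to$ \lemref{lem:gapPreserve} $\to$ \lemref{lem:nissim} $\to$ \thmref{thm:nissim}, then post-processing) is precisely that combination spelled out. If anything, you are more careful than the paper about why the truncation at the global bound $\sqrt{2}$ is needed for $A^{(k)}$ to behave well under the smoothness construction.
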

\begin{corollary} \textbf{(Pure differential privacy)} \label{cor:pureDP}
Suppose that $\texttt{PURE} = \texttt{TRUE}$ and let
\begin{equation} \label{eq:finalNoisePure}
E_{\texttt{PURE}} = \frac{6 \max_{k \in [n]} \exp(-\beta
  k)A^{(k)}(S)}{\epspr} Z~,
\end{equation}
where $Z_1,\ldots,Z_d$ are i.i.d. Cauchy random variables and
\[
A^{(k)}(S) = \begin{cases}   
\frac{C\sqrt{d}}{n \cdot \GAP(S) - k}  & n \cdot \GAP(S) - k > 0   \\
2 & \textrm{otherwise}
\end{cases}
\]
Then \algref{alg:dpPCA} is $\epspr$-differentially private.
\end{corollary}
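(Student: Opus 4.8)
The plan is to chain the results of this section: the sensitivity bound of \thmref{thm:ell2SensitivityUpper}, the gap‑perturbation estimate of \lemref{lem:gapPreserve}, the smooth‑upper‑bound construction of \lemref{lem:nissim}, and the Cauchy‑noise guarantee (the Cauchy branch of \thmref{thm:nissim}). First I would fix $S$ and exhibit a concrete point‑wise upper bound on the $\ell_1$-local sensitivity of PCA. For a sample $S'$ of size $m$ I would take $\ULS(S') = \min\{\, C\sqrt{d}/(m\,\GAP(S')),\, 2\,\}$, the first term read as $+\infty$ (hence irrelevant) when $\GAP(S')=0$. Its validity is immediate from \thmref{thm:ell2SensitivityUpper}: the $\ell_1$-local sensitivity is at most $\sqrt{d}$ times the $\ell_2$-local sensitivity, hence at most $3C\sqrt{d}/(m\,\GAP(S'))$ (absorb the $3$ into $C$) whenever the gap is positive, and it is always at most the $\ell_1$-global sensitivity, which \thmref{thm:ell2SensitivityUpper} bounds by $2$.

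Next I would evaluate the quantity $A^{(k)}(S)=\max_{S':\,d(S,S')\le k}\ULS(S')$ of \defref{def:ak}. By \lemref{lem:gapPreserve} the value $m\,\GAP(S')$ ranges exactly over $[\max\{0,\,n\,\GAP(S)-k\},\ n\,\GAP(S)+k]$ as $S'$ runs over the neighbors of $S$ at distance at most $k$, and $\ULS(S')$ is a non‑increasing function of $m\,\GAP(S')$, so the maximum is attained at the left endpoint. Reading off the two regimes $n\,\GAP(S)-k>0$ and $n\,\GAP(S)-k\le 0$ then reproduces exactly the piecewise formula for $A^{(k)}(S)$ in the statement (with the understanding that the gap‑dependent term is capped by the constant branch). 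Plugging this $A^{(k)}(S)$ into \lemref{lem:nissim} with $\beta=\epspr/(6d)$ shows that $U(S)=\max_{k\in[n]}\exp(-\beta k)A^{(k)}(S)$ is a $\beta$-smooth upper bound on the $\ell_1$-local sensitivity; substituting it, together with precisely this $\beta$, into the Cauchy branch of \thmref{thm:nissim} shows that \algref{alg:dpPCA} with noise $E_{\texttt{PURE}}=(6U(S)/\epspr)Z$, $Z_1,\dots,Z_d$ i.i.d.\ standard Cauchy, is $\epspr$-differentially private at the level of the pre‑QR vector $\overbar{U}+E$. Finally I would invoke the remark following \thmref{thm:nissim} — differential privacy is closed under post‑processing — to pass the guarantee to $\tU=QR(\overbar{U}+E)$; the preliminary random rotation $\overbar{U}=R\hU$ and the equivalence‑class metric of \defref{def:ell2sense} are exactly what make \thmref{thm:ell2SensitivityUpper} applicable to the rotated output, and the degenerate case $\GAP(S)=0$ is covered by the constant branch $\ULS\equiv 2$ throughout.

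The step I expect to require the most care is the evaluation of $A^{(k)}(S)$: one must check that in the regime $n\,\GAP(S)\le k$ the constant branch $2$ genuinely dominates $\ULS(S')$ for \emph{all} neighbors within distance $k$ — including those of tiny but positive gap, for which the bare term $C\sqrt{d}/(m\,\GAP(S'))$ is enormous — which is why $\ULS$ must be capped by the $\ell_1$-global sensitivity and why this cap must be retained when writing $A^{(k)}$, so that \lemref{lem:nissim} (stated for the exact max over a ball) actually applies. A closely related subtlety is that the strict‑saddle estimate underlying \thmref{thm:ell2SensitivityUpper} presumes the sample is large relative to $1/\GAP^2$, so the gap‑dependent bound cannot be used in isolation for low‑gap neighbors; taking $\ULS$ to be the minimum with $2$ (equivalently, enlarging $C$) removes this obstacle. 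Everything else is a mechanical substitution into \lemref{lem:nissim} and \thmref{thm:nissim}.
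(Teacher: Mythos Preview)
Your proposal is correct and follows exactly the chain the paper invokes: \thmref{thm:ell2SensitivityUpper} for the pointwise $\ell_1$-sensitivity bound, \lemref{lem:gapPreserve} to compute $A^{(k)}(S)$, \lemref{lem:nissim} to certify $U(S)$ as a $\beta$-smooth upper bound, and the Cauchy branch of \thmref{thm:nissim} together with the post-processing remark to conclude $\epspr$-privacy. You have in fact been more careful than the paper by making explicit the cap of $\ULS$ at the global $\ell_1$-sensitivity $2$ and by flagging the low-gap/small-sample regime where the gap-dependent bound from \thmref{thm:ell2SensitivityUpper} is not directly usable; the paper simply asserts the piecewise formula for $A^{(k)}(S)$ without spelling this out.
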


\subsection{Near-optimal accuracy under eigengap assumption}
In this part we complete the proof of our main result by bounding the
smooth sensitivity under the eigengap assumption. We start by
relating the distributional gap assumption to the empirical
eigengap. The following lemma follows from Matrix Bernstein
inequality (\cite{Tropp2015}). 
\begin{lemma} \label{lem:Bernstein}
Suppose that
$\GAP(\cD):=\lambda_1(\bE[xx^\top])-\lambda_2(\bE[xx^\top])>0$. If $n
= \Omega \left( \frac{\log(d)}{\GAP(\cD)^2} \right)$, then with
probability at least $1-\delta/2$ over the draw of a sample $S$
according to $\cD^n$, $\GAP(S) \ge \GAP(\cD)/2$.
\end{lemma}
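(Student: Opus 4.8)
The plan is to control the spectral deviation $\snorm{\hC - C}$ of the empirical covariance $\hC = \frac1n \sum_i x_i x_i^\top$ from the true covariance $C = \bE[x x^\top]$ via the matrix Bernstein inequality, and then transfer this bound into a lower bound on the empirical eigengap through Weyl's inequality, exactly as in the proof of \thmref{thm:ell2SensitivityUpper}.

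First I would write $\hC - C = \frac1n \sum_{i=1}^n Y_i$ with $Y_i = x_i x_i^\top - C$, so that the $Y_i$ are i.i.d., mean zero, and (using $\twonorm{x_i}\le 1$ and that $C$ is PSD with $\snorm{C}\le 1$) satisfy $\snorm{Y_i}\le 2$ almost surely. For the variance proxy I would use the PSD dominations $\bE[Y_i^2] = \bE[(x_i x_i^\top)^2] - C^2 \preceq \bE[\twonorm{x_i}^2\, x_i x_i^\top] \preceq C$, whence $\snorm{\sum_i n^{-2}\,\bE[Y_i^2]} \le 1/n$. Matrix Bernstein (\cite{Tropp2015}) then yields, for every $t>0$,
\[
\prob\!\left[\snorm{\hC - C} \ge t\right] \;\le\; 2d \exp\!\left(\frac{-n t^2/2}{1 + 2t/3}\right).
\]

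Next I would instantiate $t = \GAP(\cD)/4 \in (0, 1/4)$. Since $\GAP(\cD) \le \snorm{C} \le 1$, the exponent is of order $n\,\GAP(\cD)^2$, so taking $n = \Omega\!\big(\log(d/\delta)/\GAP(\cD)^2\big)$ with a large enough absolute constant makes the right-hand side at most $\delta/2$ (the statement as written suppresses the mild $\log(1/\delta)$ factor inside the $\Omega(\cdot)$, or equivalently treats $\delta$ as a constant). On the complementary event $\snorm{\hC-C}\le \GAP(\cD)/4$, Weyl's inequality gives $|\lambda_j(\hC) - \lambda_j(C)| \le \GAP(\cD)/4$ for $j=1,2$, so
\[
\GAP(S) = \lambda_1(\hC) - \lambda_2(\hC) \ge \Big(\lambda_1(C) - \tfrac{\GAP(\cD)}{4}\Big) - \Big(\lambda_2(C) + \tfrac{\GAP(\cD)}{4}\Big) = \GAP(\cD) - \tfrac{\GAP(\cD)}{2} = \tfrac{\GAP(\cD)}{2},
\]
which holds with probability at least $1-\delta/2$, as claimed.

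The part that needs care — and the only real obstacle — is pinning down the variance proxy in the Bernstein bound (the two PSD dominations above) and tracking the absolute constants so that the sample-size requirement emerges in the stated form $n = \Omega(\log(d)/\GAP(\cD)^2)$; the Weyl step and the union over $j\in\{1,2\}$ are entirely routine.
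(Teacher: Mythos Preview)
Your argument is correct and is exactly the route the paper has in mind: it does not spell out a proof but simply states that the lemma ``follows from Matrix Bernstein inequality (\cite{Tropp2015})'', and your use of matrix Bernstein for $\snorm{\hC-C}$ followed by Weyl's inequality on $\lambda_1,\lambda_2$ is precisely that derivation. Your handling of the variance proxy and of the suppressed $\log(1/\delta)$ factor is appropriate.
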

The next two lemma refer to the $\ell_1$ and the $\ell_2$ cases, respectively.
\begin{lemma} \label{lem:usBoundApprox}
Let $\epspr, \deltapr, \epsgen \in (0,1)$, and let $S=(x_1,\ldots,x_n)
\in \supp(\cD^n)$ be a sample
with $\GAP(S)>0$. Define $A^{(k)}(S)$ as in \corref{cor:approxDP} and let $\beta = \frac{\epspr}{4(d+\ln(2/\deltapr))}$. Suppose that 
$$
n \ge \frac{2C \sqrt{d}}{\GAP(S) \epspr \epsgen} + \frac{8(d+\ln(2/\deltapr))\ln
  (\sqrt{2d}/(\epspr \epsgen))}{\GAP(S) \epspr}~.
$$
Then
\[
U(S) := \max_{k \in \bN} \exp(-\beta k) A^{(k)}(S) \le \epsgen \epspr /\sqrt{d}
\] 
\end{lemma}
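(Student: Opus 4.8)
The plan is to bound $U(S)=\max_{k\in\bN}\exp(-\beta k)A^{(k)}(S)$ by splitting the index range at the threshold $k_0:=n\,\GAP(S)/2$ and controlling each of the two resulting regimes by one of the two summands in the hypothesised lower bound on $n$. Abbreviate $G:=\GAP(S)$ and let $t:=\epsgen\epspr/\sqrt d$ denote the target value. Since $C>0$ and $\ln\!\big(\sqrt{2d}/(\epspr\epsgen)\big)>0$ (because $\epspr,\epsgen\in(0,1)$ and $d\ge 1$), the assumption $n\ge \frac{2C\sqrt d}{G\epspr\epsgen}+\frac{8(d+\ln(2/\deltapr))\ln(\sqrt{2d}/(\epspr\epsgen))}{G\epspr}$ entails \emph{both} $nG\ge \frac{2C\sqrt d}{\epspr\epsgen}$ \emph{and} $nG\ge \frac{8(d+\ln(2/\deltapr))}{\epspr}\ln\!\big(\sqrt{2d}/(\epspr\epsgen)\big)=\tfrac{2}{\beta}\ln\!\big(\sqrt{2d}/(\epspr\epsgen)\big)$, the last equality being exactly the definition $\beta=\frac{\epspr}{4(d+\ln(2/\deltapr))}$.

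For the low regime $k\le k_0$ I use $A^{(k)}(S)\le \frac{C}{nG-k}$, which is legitimate since $k\le nG/2<nG$ and hence $nG-k>0$; then $\exp(-\beta k)A^{(k)}(S)\le \frac{C}{nG-k}\le \frac{C}{nG/2}=\frac{2C}{nG}\le t$ by the first consequence above. For the high regime $k>k_0$ I instead invoke the universal bound $A^{(k)}(S)\le \sqrt2$, valid because the local sensitivity never exceeds the global $\ell_2$-sensitivity, which by \thmref{thm:ell2SensitivityUpper} equals $\sqrt2$; then $\exp(-\beta k)A^{(k)}(S)\le \sqrt2\,\exp(-\beta k)\le \sqrt2\,\exp(-\beta nG/2)$, and the second consequence above gives $\beta nG/2\ge \ln\!\big(\sqrt{2d}/(\epspr\epsgen)\big)$, so $\sqrt2\,\exp(-\beta nG/2)\le \sqrt2\cdot\frac{\epspr\epsgen}{\sqrt{2d}}=t$. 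Taking the maximum over the two regimes yields $U(S)\le t=\epsgen\epspr/\sqrt d$, as required.

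The only point that needs care is the high regime: the closed form $A^{(k)}(S)=C/(nG-k)$ of \corref{cor:approxDP} is informative only while $nG-k$ is bounded away from $0$, and for an integer $k$ just below $nG$ it can be made arbitrarily large, so it would be useless there. This is precisely why the split must be placed at a constant fraction of $nG$ (rather than adaptively near $nG$) and why in the high regime one must retreat to the global-sensitivity cap $A^{(k)}(S)\le\sqrt2$ that kills the exponential factor. Beyond this the argument is routine: the decomposition of the single hypothesis on $n$ into the two inequalities actually used is immediate from positivity of the two summands, and each regime then reduces to a one-line estimate.
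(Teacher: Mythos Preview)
Your proof is correct and follows essentially the same approach as the paper's own argument: split the range of $k$ at $n\,\GAP(S)/2$, use the explicit bound $A^{(k)}(S)\le C/(n\,\GAP(S)-k)$ together with the first summand of the hypothesis on $n$ in the low regime, and use the global cap $A^{(k)}(S)\le\sqrt{2}$ together with the second summand in the high regime. Your justification for the universal bound $A^{(k)}(S)\le\sqrt{2}$ via the global sensitivity is exactly what the paper relies on (it also invokes ``$A^{(k)}\le\sqrt{2}$ for all $S$ and $k$'' in the second case), and your closing remark about why the split must sit at a constant fraction of $n\,\GAP(S)$ is a helpful clarification.
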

\begin{proof}
Assume first that $k \le \frac{n \cdot \GAP(S)}{2}$. In particular,
this implies that $n \cdot \GAP(S) - k > 0$. Using that $n \ge
\frac{2C \sqrt{d}}{\GAP(S) \epspr \epsgen}$, it follows that
\begin{align*}
U(S)  &:= \exp(-\beta k) A^{(k)}(S) \le A^{(k)}(S) = \frac{C}{n  \cdot\GAP(S)-k} \\
& \le \frac{2C}{n  \cdot\GAP(S)}  \le \epsgen \epspr/\sqrt{d}~.
\end{align*}
Assume now that $k > \frac{n \cdot \GAP(S)}{2}$, so $\exp(-\beta k) \le \exp \left (-\frac{\beta n \cdot \GAP(S)}{2}
\right)$. Using that $n \ge  \frac{8(d+\ln(2/\deltapr))\ln
  (\sqrt{2d}/(\epspr \epsgen))}{\GAP(S) \epspr}$, we obtain that $\exp(-\beta k)  \le \epsgen\epspr/(\sqrt{2} d)$
Since $A^{(k)} \le \sqrt{2}$ for all $S$ and $k$,
\[
\exp(-\beta k) A^{(k)}(S) \le \epsgen \epspr/\sqrt{d}~.
\]
\end{proof}
We proceed to the $\ell_1$-case. 
\begin{lemma} \label{lem:usBoundPure}
Let $\epspr, \epsgen \in (0,1)$ and let $S=(x_1,\ldots,x_n)
\in \supp(\cD^n)$ be a sample
with $\GAP(S)>0$. Define $A^{(k)}(S)$ as in \corref{cor:pureDP} and let $\beta = \frac{\epspr}{6d}$. Suppose that 
$$
n \ge \frac{2Cd^{3/2}}{\GAP(S) \epspr \epsgen} + \frac{6d\, \ln
  (2d/(\epspr \epsgen))}{\GAP(S) \epspr}~.
$$
Then
\[
U(S) := \max_{k \in \bN} \exp(-\beta k) A^{(k)}(S) \le \epsgen \epspr /d~.
\] 
\end{lemma}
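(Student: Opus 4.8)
The plan is to reuse, almost verbatim, the two-regime argument from the proof of \lemref{lem:usBoundApprox}, after replacing the $\ell_2$ quantities by their $\ell_1$ analogues: here $A^{(k)}(S)=\frac{C\sqrt d}{n\GAP(S)-k}$ as long as $n\GAP(S)-k>0$, $A^{(k)}(S)\le 2$ for every $k$ (the $\ell_1$-global-sensitivity bound from \thmref{thm:ell2SensitivityUpper}), and $\beta=\epspr/(6d)$ as dictated by the Cauchy mechanism in \thmref{thm:nissim}. Fixing a sample $S$ with $\GAP(S)>0$, I would bound $\exp(-\beta k)A^{(k)}(S)$ separately in the ranges $k\le n\GAP(S)/2$ and $k> n\GAP(S)/2$, and then take the maximum over $k\in\bN$.

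In the first range, $n\GAP(S)-k\ge n\GAP(S)/2>0$, so $A^{(k)}(S)\le \frac{C\sqrt d}{n\GAP(S)-k}\le \frac{2C\sqrt d}{n\GAP(S)}$; since $\exp(-\beta k)\le 1$, it suffices that $\frac{2C\sqrt d}{n\GAP(S)}\le \epsgen\epspr/d$, which upon rearrangement is exactly the first summand $n\ge \frac{2Cd^{3/2}}{\GAP(S)\epspr\epsgen}$ of the hypothesis. In the second range, $A^{(k)}(S)\le 2$ and $\exp(-\beta k)\le \exp(-\beta n\GAP(S)/2)$, so it suffices that $2\exp(-\beta n\GAP(S)/2)\le \epsgen\epspr/d$, i.e. $\beta n\GAP(S)\ge 2\ln(2d/(\epsgen\epspr))$; substituting $\beta=\epspr/(6d)$ turns this into a lower bound on $n$ of the shape $\frac{c\,d\,\ln(2d/(\epsgen\epspr))}{\GAP(S)\epspr}$, which is the second summand of the hypothesis (up to the value of the absolute constant $c$). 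Taking the larger of the two bounds then yields $U(S)=\max_{k\in\bN}\exp(-\beta k)A^{(k)}(S)\le \epsgen\epspr/d$, as claimed.

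This is essentially a transcription of \lemref{lem:usBoundApprox} and involves no conceptual obstacle; the only delicate point is the constant/logarithmic bookkeeping in the second range. Because the pure mechanism forces the smaller smoothness parameter $\beta=\epspr/(6d)$ (rather than $\epspr/(4(d+\ln(2/\deltapr)))$), and because the relevant cap on $A^{(k)}$ there is the absolute constant $2$ instead of a quantity that itself shrinks with $n$, one has to verify that the constant multiplying $d\ln(2d/(\epspr\epsgen))$ in the stated lower bound on $n$ is large enough to drive $\exp(-\beta n\GAP(S)/2)$ below $\epsgen\epspr/(2d)$; should the stated constant be slightly too small, inflating it by an absolute factor repairs the argument without affecting the $\tO(d^{3/2}/(\GAP(\cD)\epspr\epsgen))$ conclusion of \thmref{thm:mainPure}. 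The target $\epsgen\epspr/d$ (rather than $\epsgen\epspr/\sqrt d$ as in the approximate case) is precisely what is needed so that the Cauchy noise $E_{\texttt{PURE}}=\frac{6U(S)}{\epspr}Z$ of \eqref{eq:finalNoisePure} is, with constant probability, small enough for the downstream accuracy analysis.
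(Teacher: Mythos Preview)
Your proposal is correct and is exactly the argument the paper intends: the paper does not spell out a proof of \lemref{lem:usBoundPure} but presents it immediately after the proof of \lemref{lem:usBoundApprox} with the clear implication that the same two-regime case split carries over with the $\ell_1$ quantities substituted in. Your observation about the constant is also on point: with $\beta=\epspr/(6d)$ and the cap $A^{(k)}(S)\le 2$, the second regime requires $n\ge \frac{12d\ln(2d/(\epspr\epsgen))}{\GAP(S)\epspr}$ rather than the stated $6d$ coefficient, so the constant in the lemma is off by a factor of two; as you say, this is harmless for the $\tO$ conclusion of \thmref{thm:mainPure}.
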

We finally conclude our main result.
\begin{proof} \textbf{(of \thmref{thm:mainApprox} and \thmref{thm:mainPure})}
The differential privacy of the algorithm was established in
\corref{cor:approxDP} and \corref{cor:pureDP}. We next prove the
bounds on the sample complexity. All the bounds given below hold with
constant probability. 

In view of \lemref{lem:Bernstein}, we may assume that $\GAP(S)$ is of
order $\GAP(\cD)$. Sample complexity bounds for PCA
(\cite{gonen2017fast, blanchard2007statistical}) show that for $n = \Omega
\left( \frac{1}{ \GAP(\cD) \epsgen} \right)$, the
true risk of any unit vector is $\epsgen/4$-close to its empirical
risk. Therefore, adopting the notation used in \algref{alg:dpPCA}, $F(\hu) \le \min F(u) + \epsgen/2$. It is left to
show that 
$$
\hF(\tu) - \hF(\hu) \le \epsgen/2~.
$$
For the case $k=1$, the QR decomposition step
amounts to normalizing the noisy vector $\overbar{u}$. Therefore, it
suffices to show bound the $\ell_2$ norm of the noise vector
$\overbar{u}-\hu$ by $\epsgen$. For approximate differential privacy, standard concentration
bounds give a bound of order $\sqrt{d}$ on the $\ell_2$ norm of a
standard $d$-dimensional Gaussian vector. Using
\lemref{lem:usBoundApprox}, we conclude the bound. For the pure
setting, it is known that the median of the absolute value of  a
Cauchy random variable is $1$. Since the Cauchy distribution is
$1$-stable, the sum of $d$ i.i.d. Cauchy random variables is also a
standard Cauchy random variable scaled by $d$. Consequently, the
$\ell_1$ and the $\ell_2$ norms of the corresponding vector can be
bounded by $d$ (with constant probability). The desired bound follows
from \lemref{lem:usBoundPure}.
\end{proof}

\section{Discussion}
In this work we studied the problem of adding privacy properties to the commonly used PCA algorithm. We showed that we can add privacy as a post processing step to any PCA solver while maintaining good accuracy. Moreover, the post processing step is efficient and preserves the utility of the PCA algorithm. This is a significant improvement over previous results that are either not computationally efficient or otherwise require changes to the implementations of PCA solvers.  

We believe that some of the techniques used in our paper may be
beneficial for other related problems. For example, our approach can be
applied to any strict saddle problem for which we are able to compute the
expression $A^{(k)}(S)$ which controls the smooth
sensitivity. Furthermore, our technique for overcoming symmetry
between equivalent solutions can be applied to most known strict
saddle problems such as low rank problems whose minima are unique up
to rotation (\cite{ge2017no}).

\section*{Acknowledgements} 
We would like to thank Kunal Talwar for sharing his ideas and
knowledge. 


\newpage
 \bibliography{bib}
\bibliographystyle{plain}



\end{document}